\documentclass[11pt]{article}

\usepackage[preprint]{acl}

\usepackage{times}
\usepackage{latexsym}

\usepackage[T1]{fontenc}

\usepackage[utf8]{inputenc}

\usepackage{microtype}

\usepackage{inconsolata}

\usepackage{graphicx}

\usepackage{lipsum}
\usepackage{amsmath}
\usepackage{amsfonts}
\usepackage{multirow}       
\usepackage{subfigure}
\usepackage{xspace}         
\usepackage{enumitem} 
\usepackage[hypcap=true]{caption}
\usepackage{comment}
\usepackage{booktabs}
\usepackage{amsthm}
\usepackage{colortbl}
\usepackage[table]{xcolor}
\usepackage{algorithm}      
\usepackage{algpseudocode}  
\usepackage{tikz}
\usetikzlibrary{arrows.meta, positioning, calc}

\newtheorem{theorem}{Theorem}
\newtheorem{definition}{Definition}
\DeclareMathOperator{\Cov}{Cov}

\newcommand{\mname}{{ReGLU}\xspace}
\newcommand{\initname}{{RILA}\xspace}
\newcommand{\regname}{{ROL}\xspace}

\definecolor{basebg}{HTML}{F5F5F5}     
\definecolor{methodbg}{HTML}{E3F2FD}    
\definecolor{bestcolor}{HTML}{1565C0} 

\newcommand{\basecc}[1]{\cellcolor{basebg}#1}
\newcommand{\methodcc}[1]{\cellcolor{methodbg}#1}

\definecolor{heat1}{HTML}{F7FBFF}
\definecolor{heat2}{HTML}{DEEBF7}
\definecolor{heat3}{HTML}{C6DBEF}
\definecolor{heat4}{HTML}{9ECAE1}
\definecolor{heat5}{HTML}{6BAED6}
\definecolor{heat6}{HTML}{4292C6}

\newcommand{\heatB}[1]{\cellcolor{heat2}#1}
\newcommand{\heatC}[1]{\cellcolor{heat3}#1}
\newcommand{\heatD}[1]{\cellcolor{heat4}#1}
\newcommand{\heatE}[1]{\cellcolor{heat5}#1}
\newcommand{\heatF}[1]{\cellcolor{heat6}#1}

\title{Representation-Guided Parameter-Efficient LLM Unlearning}

\author{%
    Zeguan Xiao$^{1*}$, \
    Lang Mo$^{2}$\thanks{\ \ Equal Contribution.}, \
    Yun Chen$^{1}$,
    Lei Yang$^{3}$,
    Jiehui Zhao$^{3}$\\
    \textbf{Lili Yang}$^{2}$\thanks{Corresponding Authors.},
    \textbf{Guanhua Chen}$^{2}$\footnotemark[2]  \ \\
    $^1$Shanghai University of Finance and Economics  \\
    $^2$Southern University of Science and Technology, $^3$Deepexi Technology Co. Ltd. 
}

\begin{document}
\maketitle
\begin{abstract}
Large Language Models (LLMs) often memorize sensitive or harmful information, necessitating effective machine unlearning techniques. While existing parameter-efficient unlearning methods have shown promise, they still struggle with the forget-retain trade-off.
This can be attributed to their reliance on parameter importance metrics to identify parameters that are important exclusively for forget set, which is fundamentally limited by the superposition phenomenon. Due to the polysemantic nature of LLMs parameters, such an importance metric may struggle to disentangle parameters associated with forget and retain sets.
In this work, we propose \textbf{Re}presentation-\textbf{G}uided \textbf{L}ow-rank \textbf{U}nlearning (\mname), a novel approach that leverages the geometric properties of representation spaces to achieve robust and precise unlearning. First, we develop a representation-guided initialization for LoRA that identifies the optimal subspace for selective forgetting. Second, we introduce a regularization loss that constrains the outputs of the LoRA update to lie in the orthogonal complement of the retain set's representation subspace, thereby minimizing interference with the model's performance on the retain set. We evaluate \mname on the TOFU and WMDP benchmarks across multiple models. Our results demonstrate that \mname consistently outperforms state-of-the-art baselines, achieving superior unlearning quality while maintaining higher model utility.
\end{abstract}

\section{Introduction}

Large Language Models (LLMs) have achieved remarkable success across various natural language processing tasks, demonstrating emergent abilities in reasoning, coding, and creative writing \citep{wei2022emergent}. However, these models are often trained on massive, uncurated datasets that may contain sensitive personal information, copyrighted content, or harmful biases \citep{brown2022privacy, bender2021dangers}. The tendency of LLMs to memorize and regenerate such data poses significant risks to privacy and intellectual property \citep{carlini2022quantifying, nasr2023scalable}. Consequently, there is an urgent need for Machine Unlearning (MU) \citep{cao2015towards}, which aims to remove specific knowledge from a pre-trained model without the prohibitive cost of retraining from scratch.

Most existing LLM unlearning methods rely on full fine-tuning, i.e., updating all model parameters. However, modifying billions of parameters is computationally expensive and, more critically, substantially increases the risk of catastrophic forgetting. To address this issue, recent studies have leveraged LoRA \citep{hu2022lora} for LLM unlearning and demonstrated that unlearning performance can be comparable to, or even better than, full fine-tuning while substantially reducing computational cost \citep{cha2025fila,kim2025improving}.

Despite this progress, these methods still struggle with the forget-retain trade-off: reducing performance on the forget set often comes at the cost of degraded performance on the retain set \citep{fisher1922mathematical, cha2025fila, kim2025improving, xiao2026modeling}. We hypothesize that this limitation stems from their reliance on estimating parameter importance (e.g., via Fisher information \citep{fisher1922mathematical}) while neglecting the phenomenon of superposition \citep{elhage2022toy} in LLMs. The superposition phenomenon implies that a single parameter is often involved in the representation of multiple concepts, leading to polysemanticity of parameters. Consequently, relying on such importance measures to isolate forget-related parameters becomes problematic, as these parameters are often simultaneously crucial for maintaining performance on the retain set or other unseen general knowledge.

Our approach is built on a key insight: while parameter importance measures may be unreliable due to superposition, the representation subspaces can be more effectively disentangled \citep{zou2023representation}.
By constraining unlearning updates within a subspace that aligns with forget-set representations while minimizing interference with retain-set representations, we can more effectively isolate forget-related knowledge and preserve model utility.

In this work, we propose \textbf{Re}presentation-\textbf{G}uided \textbf{L}ow-rank \textbf{U}nlearning (\mname), a novel approach that leverages the geometric properties of representation spaces to achieve robust and precise unlearning. Our approach consists of two components.
First, we develop \initname (\textbf{R}epresentation-guided \textbf{I}nitialization of \textbf{L}ow-rank \textbf{A}daption), a LoRA initialization strategy for LLM unlearning. \initname identifies a balanced subspace that maximizes the variance of forget-set representations while minimizing that of the retain set.
Second, we introduce \regname (\textbf{R}epresentation \textbf{O}rthogonal \textbf{L}oss), a regularization loss term for unlearning. By identifying the principal subspace of the retain set's representations, this loss enforces an orthogonality constraint on the LoRA up-projection matrices. This ensures that the LoRA update lies in the orthogonal complement of a subspace of the representations of retain set, thereby minimizing the interference with the original model's behavior.

We evaluate \mname on two widely used LLM unlearning benchmarks: TOFU \citep{maini2024tofu} and WMDP \citep{li2024wmdp}. Our results demonstrate that \mname consistently outperforms baselines.

Our contributions are summarized as follows\footnote{Code: \url{https://github.com/sustech-nlp/ReGLU}}:
\begin{itemize}[leftmargin=*]
    \item \textbf{Methodology:} We propose \mname, a novel framework that shifts the LoRA-based LLM unlearning paradigm from parameter importance to representation geometry.
    \item \textbf{Experiments:} We conduct extensive evaluations on the TOFU and WMDP benchmarks across multiple models, including Llama-2-7B \citep{touvron2023llama}, Phi-1.5 \citep{li2023phi15}, and Zephyr-7B-beta \citep{tunstall2024zephyr}. The results demonstrate that \mname consistently establishes new state-of-the-art performance.
    \item \textbf{Analysis:} We provide a theoretical guarantee for our initialization strategy and offer in-depth geometric diagnostics. Our analysis confirms that \mname successfully disentangles forget and retain representations, validating that subspace-level control is more robust than parameter-level estimation for unlearning.
\end{itemize}

\begin{figure*}[t]
    \centering
    \includegraphics[width=\textwidth]{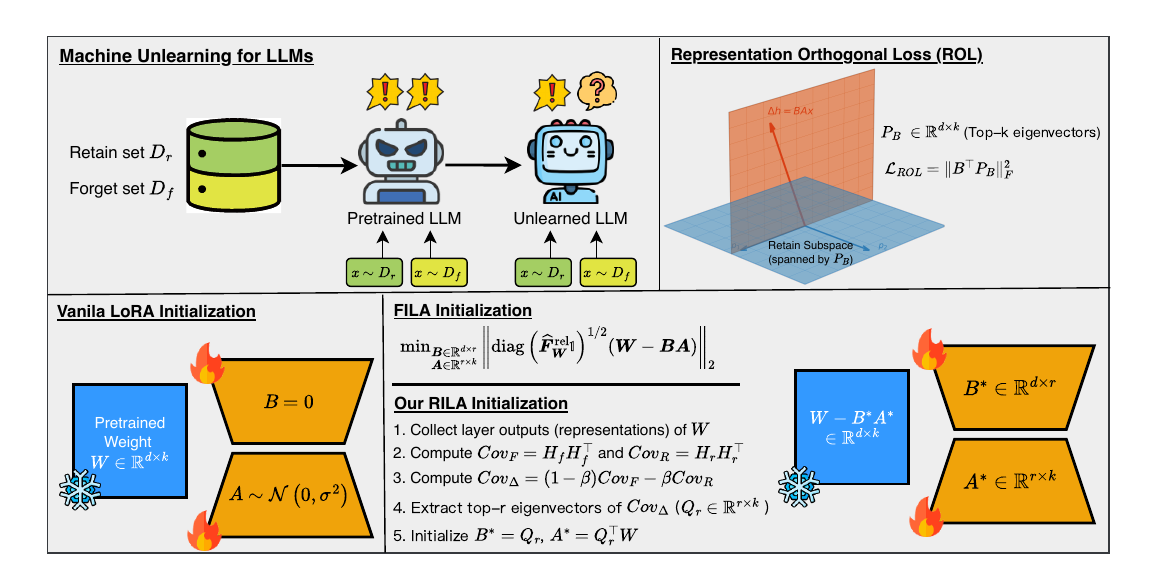}
    \caption{LLM unlearning aims to remove specific information from a pre-trained model. FILA estimates parameter importances $\widehat{\boldsymbol{F}}_{\boldsymbol{W}}^{\mathrm{rel}}$ and solves a weighted low-rank approximation problem to initialize LoRA matrices. Our \mname framework collects layer representations and leverages representation geometry to guide selective forgetting while preserving retain set knowledge.}
    \label{fig:main_architecture}
\end{figure*}

\section{Background}

\subsection{Problem Definition}
For a model $\mathcal{M}$ that is trained on a dataset $\mathcal{D}$, Machine Unlearning (MU) \citep{cao2015towards} aims to remove specific information from $\mathcal{M}$, resulting in an unlearned model $\mathcal{M}'$ that no longer retains or utilizes this undesired information.
Formally, we define the information to forget as a subset of $\mathcal{D}$, called the \textit{forget set} $\mathcal{D}_f$.
Ideally, after unlearning, the model should behave as if only trained on \( \mathcal{D}_r = \mathcal{D} \setminus \mathcal{D}_f\), referred to as the \textit{retain set}.

In the context of LLM unlearning, the forget set $\mathcal{D}_f$ and retain set $\mathcal{D}_r$ are text corpora. The unlearning process typically involves fine-tuning the original model $\mathcal{M}$ on $\mathcal{D}_f$, optionally using the retain set $\mathcal{D}_r$, with specific objectives to obtain $\mathcal{M}'$.

\subsection{Low-Rank Adaptation (LoRA)}
LoRA \citep{hu2022lora} is a parameter-efficient fine-tuning technique that allows effective adaptation with significantly fewer trainable parameters.
Specifically, for a linear layer with weight matrix \(W_0 \in \mathbb{R}^{d_{out} \times d_{in}}\), LoRA parameterizes the update as a low-rank update $\Delta W$:
\begin{equation*}
W = W_0 + \Delta W = W_0 + BA,
\end{equation*}
where \(B \in \mathbb{R}^{d_{out} \times r}\) and \(A \in \mathbb{R}^{r \times d_{in}}\) are the low-rank matrices with \(r \ll \min(d_{out}, d_{in})\).
During finetuning, only the matrices \(B\) and \(A\) are updated, while the original weights \(W_0\) remain fixed.

\subsection{LLM Unlearning Methods}
\label{sec:llm_unlearning_methods}

Given a pre-trained LLM with parameters \(\boldsymbol{\theta}\), we denote the probability distribution it defined as \(p(x; \boldsymbol{\theta})\), where \(x\) represents the input text.

The most prevalent approach to unlearning is to suppress the model's likelihood on the forget set \(\mathcal{D}_f\)—that is, drive $p(x;\boldsymbol{\theta})$ downward for $x\in\mathcal{D}_f$.
This is commonly implemented by performing Gradient Ascent (GA) on the cross-entropy objective (equivalently, minimizing the negative cross-entropy) over $\mathcal{D}_f$:
\begin{equation*}
\mathcal{L}_{\mathrm{GA}}(\mathcal{D}_f; \boldsymbol{\theta})
= - \mathbb{E}_{x \sim \mathcal{D}_f}\big[-\log p(x; \boldsymbol{\theta})\big].
\end{equation*}
Minimizing the above GA objective reduces the assigned probabilities $p(x;\boldsymbol{\theta})$, achieving the goal of minimizing the forget-set likelihood.

Given the inherent issues of GA \citep{zhang2024negative,cha2025fila}, some methods have been proposed as alternative unlearning objectives. For instance, NPO \citep{zhang2024negative} and SimNPO \citep{fan2024simplicity} regularize GA by transforming the unbounded objective into a bounded one and applying adaptive smoothing to the forget-set gradients, allowing for more controlled divergence during unlearning, preventing catastrophic collapse.

Inverted Hinge Loss (IHL) \citep{cha2025fila} is another approach designed to simultaneously address the issues of unbounded loss, gradient spread, and the degradation of generative performance:
\begin{equation*}
\mathcal{L}_{\mathrm{IHL}}(\boldsymbol{x})
= 1 + p\left(x_t | x_{<t}; \boldsymbol{\theta}\right)
- \max_{v \neq x_t} p\left(v | x_{<t}; \boldsymbol{\theta}\right).
\end{equation*}

FILA \citep{cha2025fila} and VILA \citep{kim2025improving} employ Fisher Information (FI) to identify parameters associated with a dataset.
The FI of a dataset $\mathcal{D}$ with respect to model parameters $\boldsymbol{\theta}$ is defined as:
\begin{equation*}
\mathcal{F}_{\boldsymbol{\theta}}(\mathcal{D}) = \mathbb{E}_{\mathcal{D}} \left[ \left( \frac{\partial}{\partial \boldsymbol{\theta}} \log p(x; \boldsymbol{\theta}) \right)^2 \right].
\end{equation*}
FILA computes the ratio of FI values from the forget set and retain set to determine the \textit{forget importance map} $\mathcal{S} = \mathcal{F}_{\boldsymbol{\theta}}(\mathcal{D}_f) / \mathcal{F}_{\boldsymbol{\theta}}(\mathcal{D}_r)$.
By leveraging this map, FILA initializes the LoRA matrices $A$ and $B$ such that $BA$ captures the components of the original weight matrix $W_0$ most relevant to the forget set.
VILA extends FILA by addressing the inaccuracy in FI as a variance measure when the score function has a non-zero expectation.

\section{Methodology}
\label{sec:methodology}

In this section, we present \mname, a representation-guided framework for LoRA-based LLM unlearning. Our framework consists of two complementary components. First, in Section~\ref{sec:initialization}, we introduce \initname, a LoRA initialization strategy that leverages the geometric structure of representation spaces to align the initial LoRA update with a subspace maximally discriminative between the forget and retain sets. Second, in Section~\ref{sec:regularization}, we propose \regname, a regularization loss that enforces orthogonality between the LoRA update and the principal subspace of retain-set representations, preventing interference with preserved knowledge during training. We conclude with the overall optimization objective that combines these components.

\subsection{\initname: A Representation-Guided LoRA Initialization for LLM Unlearning}
\label{sec:initialization}

\subsubsection{Motivation}
Consider a linear layer $h = W_0 x$, where $x \in \mathbb{R}^{d_{in}}$ is the input representation, $W_0 \in \mathbb{R}^{d_{out} \times d_{in}}$ is the pre-trained weight matrix, and $h \in \mathbb{R}^{d_{out}}$ is the output representation. 
Fine-tuning for unlearning aims to learn a parameter update $\Delta W$ such that the updated weight $W = W_0 + \Delta W$ reduces the likelihood of the forget set $\mathcal{D}_f$ while maintaining performance on the retain set $\mathcal{D}_r$.
Ideally, the update $\Delta W$ should have minimal impact on the outputs for inputs from the retain set. This can be expressed as $(W_0 + \Delta W) x \approx W_0 x$, which implies $\Delta W x \approx 0$ for all $x \in \mathcal{D}_r$. For the forget set $\mathcal{D}_f$, the update $\Delta W$ should produce a substantial change in the output to effectively suppress the model's ability to recall target knowledge.

To translate these requirements into a formal objective, we define the unlearning problem as finding an update $\Delta W$ that maximizes the "differential energy" between the forget and retain sets. Specifically, we aim to maximize the expected change in output norm for the forget set while simultaneously minimizing it for the retain set:
\begin{equation}
\begin{split}
\label{eq:norm_objective}
\max_{\Delta W} \quad & (1 - \beta)\, \mathbb{E}_{x \sim \mathcal{D}_f}\!\left[\|\Delta W x\|_2^2\right] \\
&\quad - \, \beta\, \mathbb{E}_{x \sim \mathcal{D}_r}\!\left[\|\Delta W x\|_2^2\right],
\end{split}
\end{equation}
where $\beta \in [0, 1]$ is a hyperparameter that balances the trade-off between forget and retain.

LoRA-based LLM unlearning methods, FILA \citep{cha2025fila} and VILA \citep{kim2025improving}, have achieved promising results by leveraging informed initialization strategies. Meanwhile, recent research on LoRA \citep{meng2024pissa, wang-etal-2025-milora} highlights that the initialization strategy plays a crucial role in determining the convergence behavior and final performance of the model. Inspired by these successes, we directly model the weight update $\Delta W$ as LoRA and develop our LoRA initialization strategy to solve the unlearning problem by initializing the LoRA matrices to explicitly align with the differential energy objective defined in Eq.~\ref{eq:norm_objective}.

\subsubsection{Representation-Guided Initialization}

Following the motivation in the previous section, we aim to initialize the LoRA matrices $A$ and $B$ such that the update $\Delta W = BA$ is initially aligned with a subspace that maximizes the impact on the forget set $\mathcal{D}_f$ while minimizing the interference with the retain set $\mathcal{D}_r$.

Our key insight is to leverage the decomposability of representations \citep{zou2023representation} to guide the initialization of LoRA. This approach circumvents the issues of parameter polysemanticity caused by the superposition phenomenon \citep{elhage2022toy}, which can be a potential limitation when directly employing parameter importance to initialize LoRA matrices.

We formalize our initialization strategy and its theoretical justification in the following theorem:
\begin{theorem}
\label{theorem:init-of-BA}
Consider the distributions $\mathcal{P}_{F}$ and $\mathcal{P}_{R}$ of the output representation $h = W_0 x$ for inputs $x$ sampled from $\mathcal{D}_f$ and $\mathcal{D}_r$, respectively. Let $\Cov_F$ and $\Cov_R$ be their corresponding covariance matrices:
\begin{equation*}
    \Cov_F = \mathbb{E}_{h \sim \mathcal{P}_{F}} [h h^\top], \quad \Cov_R = \mathbb{E}_{h \sim \mathcal{P}_{R}} [h h^\top].
\end{equation*}
Define $\Cov_{\Delta}$ as:
\begin{equation*}
    \Cov_{\Delta} = (1 - \beta) \Cov_F - \beta \Cov_R,
\end{equation*}
where $\beta \in [0, 1]$ is a hyperparameter.
When $A, B$ are initialized as $B_{\text{init}} = Q_r$ and $A_{\text{init}} = Q_r^\top W_0$, where $Q_r = [q_1, q_2, \dots, q_r]$ is the matrix of the top-$r$ eigenvectors of $\Cov_{\Delta}$, Eq.~\ref{eq:norm_objective} is maximized at initialization.
\end{theorem}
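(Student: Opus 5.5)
The plan is to reduce the objective in Eq.~\ref{eq:norm_objective} to a trace functional of $\Cov_{\Delta}$ and then invoke the Ky Fan maximum principle. First, I would make the feasible class precise. Taken literally, Eq.~\ref{eq:norm_objective} is unbounded over arbitrary $\Delta W$: whenever $\Cov_{\Delta}$ has a positive eigenvalue, rescaling $\Delta W$ increases the objective without limit. So I would read the theorem as a statement about updates of the form $\Delta W = U U^\top W_0$, where $U \in \mathbb{R}^{d_{out}\times r}$ has orthonormal columns. Such an update projects the pre-trained output $h = W_0 x$ onto an $r$-dimensional subspace, and this is exactly the form taken by $B_{\text{init}}A_{\text{init}} = Q_r Q_r^\top W_0$. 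Making this restriction explicit is the main conceptual obstacle. The first thing I would do is state it as the interpretation of ``maximized at initialization'', namely optimality among rank-$r$ projective updates.

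Second, I would rewrite the objective within this class. For $\Delta W = UU^\top W_0$ we have $\Delta W x = UU^\top h$. Since $UU^\top$ is a symmetric idempotent, $\|UU^\top h\|_2^2 = h^\top U U^\top h = \operatorname{tr}(U^\top h h^\top U)$. Taking expectations over $x\sim\mathcal{D}_f$ and $x\sim\mathcal{D}_r$ and using linearity of trace and expectation, the objective becomes
\[
(1-\beta)\operatorname{tr}(U^\top \Cov_F U) - \beta \operatorname{tr}(U^\top \Cov_R U) = \operatorname{tr}(U^\top \Cov_{\Delta} U).
\]
This step is routine and only uses the definitions of $\Cov_F$, $\Cov_R$ as second-moment matrices of $h$.

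Third, I would observe that $\Cov_{\Delta}$ is symmetric, though generally indefinite, so it has an orthonormal eigenbasis $q_1,\dots,q_{d_{out}}$ with eigenvalues $\lambda_1\ge\cdots\ge\lambda_{d_{out}}$. By the Ky Fan theorem, or equivalently Courant--Fischer applied iteratively, $\max_{U^\top U = I_r}\operatorname{tr}(U^\top \Cov_{\Delta} U) = \sum_{i=1}^r \lambda_i$, and the maximum is attained at $U = Q_r$, unique up to right multiplication by an orthogonal $r\times r$ matrix. Substituting $U = Q_r$ gives $\operatorname{tr}(Q_r^\top \Cov_{\Delta} Q_r) = \sum_{i\le r}\lambda_i$, which completes the argument.

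I would add a remark that indefiniteness causes no trouble for Ky Fan. However, if some of the top-$r$ eigenvalues are negative, then a smaller-rank projection would do better. So the claim should be understood with the rank fixed at exactly $r$, or else assume $\lambda_r \ge 0$.
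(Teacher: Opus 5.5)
Your proof is correct and matches the paper's argument step for step. The paper also writes $\Delta W x = Q_rQ_r^\top W_0 x = \Pi_S(h)$, reduces the objective to $\operatorname{tr}(\tilde I_r \Cov_{\Delta})$ using idempotence of the projector, and concludes with Ky Fan over $r$-dimensional subspaces. It therefore tacitly uses the same restriction to rank-$r$ projective updates that you make explicit, and your unboundedness and $\lambda_r \ge 0$ caveats are valid refinements it omits.

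The one slip is your uniqueness claim. Uniqueness of the maximizer up to an $r\times r$ orthogonal rotation needs the eigengap $\lambda_r > \lambda_{r+1}$, which the paper assumes explicitly for that part. Without the gap, other subspaces using directions from the tied eigenspace attain the same maximum.
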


The proof of Theorem \ref{theorem:init-of-BA} is provided in Appendix~\ref{app:proof}. Intuitively, this theorem reveals that by constructing a balanced covariance matrix $\Cov_{\Delta}$ that contrasts the forget and retain sets, we can extract eigenvectors that capture the most discriminative directions. These eigenvectors define a subspace where the forget-set representations have high variance while the retain-set representations have low variance, naturally aligning with our goal of selective forgetting.

In practice, since the true distributions $\mathcal{P}_{F}$ and $\mathcal{P}_{R}$ are unknown, we estimate $\Cov_F$ and $\Cov_R$ empirically: feed a small number of samples from $\mathcal{D}_f$ and $\mathcal{D}_r$ through the pre-trained model, collect layer outputs $H_f \in \mathbb{R}^{N_f \times d}$ and $H_r \in \mathbb{R}^{N_r \times d}$, and compute the sample covariances. Using these, build $\Cov_{\Delta}$ and extract $Q_r$ to perform the initialization above. Appendix~\ref{app:cov-estimation} presents a concentration bound for empirical covariance estimation, showing that under bounded activations the empirical estimators converge in spectral norm at the rate $O(M^2\sqrt{\log(d/\delta)/N})$, where $M$ is an upper bound on the activation norm, $d$ is the representation dimension, $N$ is the number of samples, and $\delta$ controls the confidence level of the bound. As a result, the empirical balanced covariance remains close to $\Cov_\Delta$, and its top-$r$ eigenspace is stable whenever the eigengap of $\Cov_\Delta$ is sufficiently large.

\subsection{Representation Orthogonal Loss: A Subspace-Controlled Regularization}
\label{sec:regularization}

To further ensure that the unlearning process maintains performance on the retain set $\mathcal{D}_r$, we introduce Representation Orthogonal Loss (ROL), a subspace-controlled regularization term in training loss. This loss is designed to constrain the output of the LoRA to be orthogonal to the retain set's representations, thereby minimizing interference with the knowledge that should be preserved.

We first identify the principal subspace of the representations for the retain set $\mathcal{D}_r$. Let $H_r = \{h_i\}_{i=1}^N$ be the set of output representations $h = W_0 x$ collected from a small subset of the retain set. We perform eigenvalue decomposition on the covariance matrix of $H_r$ to obtain an orthonormal basis $P_B \in \mathbb{R}^{d_{out} \times k}$, where $k$ is a hyperparameter controls the strength of the regularization. This matrix $P_B$ captures the most critical directions in the representation space that represent the knowledge to be retained.

During training, $P_B$ remains fixed. We define the \regname as:
\begin{equation}
\mathcal{L}_{\regname} = \| \mathbf{B}^\top P_B \|_F^2,
\end{equation}
where $\mathbf{B} \in \mathbb{R}^{d_{out} \times r}$ is the LoRA up-projection matrix and $\|\cdot\|_F$ denotes the Frobenius norm. This loss function regularizes the orthogonality between every pair of columns of $\mathbf{B}$ and $P_B$.
Geometrically, this loss encourages that the LoRA update $\Delta h = B(Ax)$ lies in the orthogonal complement of a subspace of the representations of retain set, thereby minimizing the interference with the original model's behavior on $\mathcal{D}_r$.

\subsection{Overall Algorithm}

The final optimization objective for \mname is a weighted combination of the forget loss, the retain loss, and the orthogonal regularization:
\begin{equation*}
\mathcal{L}_{\text{total}} = \mathcal{L}_{\text{forget}}(\mathcal{D}_f) + \gamma\, \mathcal{L}_{\text{retain}}(\mathcal{D}_r) + \lambda \mathcal{L}_{\regname},
\end{equation*}
where $\mathcal{L}_{\text{forget}}$ can be any of the objectives discussed in Section \ref{sec:llm_unlearning_methods}, and $\gamma, \lambda$ are hyperparameters balancing the different objectives.

The overall unlearning process of \mname is summarized in Algorithm \ref{alg:method} in Appendix~\ref{app:algorithm}. First, we collect a small number of samples from both the forget set $\mathcal{D}_f$ and the retain set $\mathcal{D}_r$ to estimate the covariance matrices $\Cov_F$ and $\Cov_R$. We then compute the balanced covariance matrix $\Cov_\Delta$ and extract its top-$r$ eigenvectors to initialize the LoRA matrices $A$ and $B$ as described in Section 4.2. Simultaneously, we use $\Cov_R$ to construct the orthogonal basis $P_B$. Finally, we optimize the LoRA parameters by minimizing the total loss $\mathcal{L}_{\text{total}}$, which balances unlearning effectiveness, retain set performance, and subspace-controlled regularization.

\section{Experiments}

\subsection{Experimental Setup}

\paragraph{Benchmarks.}
We conduct experiments on two widely used LLM unlearning benchmarks: TOFU \citep{maini2024tofu} and WMDP \citep{li2024wmdp}. TOFU offers 200 diverse synthetic author profiles, each consisting of 20 question-answer pairs. Subsets of these profiles (1\%, 5\%, or 10\%) serve as the forget set for unlearning.
WMDP contains expert-written multiple-choice questions in biosecurity, cybersecurity, and chemistry domains. Following \citet{li2024wmdp}, we use the provided forget corpus and use Wikitext \citep{merity2016pointer} as the retain set. We focus on biosecurity and cybersecurity domains, since the forget corpus for the chemistry domain is not publicly available.

\paragraph{Baselines.}
We compare our method with other LoRA-based unlearning methods, specifically FILA \citep{cha2025fila} and VILA \citep{kim2025improving}, with two unlearning loss functions: GD \citep{liu2022continual} and IHL \citep{cha2025fila}. We focus our evaluation on LoRA-based approaches to ensure a fair comparison within the parameter-efficient framework. This choice is motivated by findings in \citet{cha2025fila}, which demonstrate that LoRA-based unlearning can achieve performance on par with, or even superior to, full fine-tuning.

\paragraph{Models.}
We follow the default configurations for each benchmark. For TOFU, we evaluate unlearning on Llama-2-7B \citep{touvron2023llama} and Phi-1.5B \citep{li2023phi15}. For WMDP, we use Zephyr-7B-beta \citep{tunstall2024zephyr}.

\begin{table*}[t]
\centering
\begin{tabular}{c|l|c|c|c|c}
\toprule
\textbf{Model} & \textbf{Method} & \textbf{Forget 1\%} $\uparrow$ & \textbf{Forget 5\%} $\uparrow$ & \textbf{Forget 10\%} $\uparrow$ & \textbf{AVG.} $\uparrow$ \\
\midrule
\multirow{10}{*}{Phi-1.5B} & \basecc{Original Model} & \basecc{-2.5} & \basecc{-11.5} & \basecc{-16.9} & \basecc{-10.3} \\
  \cmidrule{2-6}
 & GD & -2.5 & -11.2 & -17.3 & -10.3 \\
 & GD + FILA & -1.8 & -8.7 & -11.9 & -7.5 \\
 & GD + VILA & -2.5 & -10.9 & -14.4 & -9.3 \\
 & \methodcc{GD + \mname} & \methodcc{-0.8} & \methodcc{-9.9} & \methodcc{-11.2} & \methodcc{-7.3} \\
  \cmidrule{2-6}
 & IHL & -1.3 & -11.5 & -12.4 & -8.4 \\
 & IHL + FILA & -2.5 & -9.3 & -10.3 & -7.4 \\
 & IHL + VILA & -2.9 & -10.2 & -10.2 & -7.8 \\
 & \methodcc{IHL + \mname} & \methodcc{-0.1} & \methodcc{-5.4} & \methodcc{-7.7} & \methodcc{-4.4} \\
\midrule
\multirow{10}{*}{Llama-2-7B} & \basecc{Original} & \basecc{-2.9} & \basecc{-14.0} & \basecc{-16.9} & \basecc{-11.3} \\
  \cmidrule{2-6}
 & GD & -3.3 & -13.2 & -13.8 & -10.1 \\
 & GD + FILA & -1.3 & -11.2 & -16.6 & -9.7 \\
 & GD + VILA & -2.5 & -13.6 & -14.7 & -10.3 \\
 & \methodcc{GD + \mname} & \methodcc{-0.6} & \methodcc{-12.9} & \methodcc{-13.3} & \methodcc{-8.9} \\
  \cmidrule{2-6}
 & IHL & -2.9 & -14.3 & -16.6 & -11.3 \\
 & IHL + FILA & -2.2 & -11.2 & -7.5 & -6.9 \\
 & IHL + VILA & -2.5 & -8.2 & -11.1 & -7.2 \\
 & \methodcc{IHL + \mname} & \methodcc{-0.7} & \methodcc{-5.4} & \methodcc{-1.1} & \methodcc{-2.4} \\
\bottomrule
\end{tabular}
\caption{Main results on the TOFU benchmark. We report the log-scaled Forget Quality (FQ) for different forget set sizes (1\%, 5\%, and 10\%) on Phi-1.5B and Llama-2-7B. Higher FQ indicates better unlearning performance, signifying that the unlearned model's behavior is statistically closer to a model retrained from scratch. All methods maintain at least 95\% of the original model's utility.}
\label{tab:tofu_results}
\end{table*}

\begin{table}[t]
\centering
\begin{tabular}{l|c|c|c}
\toprule
\textbf{Method} & \textbf{Bio} $\downarrow$ & \textbf{Cyber} $\downarrow$ & \textbf{AVG.} $\downarrow$ \\
\midrule
\rowcolor{basebg} Original Model & 64.7 & 44.5 & 54.6 \\
\midrule
GD & 58.0 & 37.3 & 47.7 \\
GD + FILA & 59.1 & 41.3 & 50.2 \\
GD + VILA & 49.3 & 26.9 & 38.1 \\
\rowcolor{methodbg} GD + \mname & 41.8 & 28.4 & \textbf{35.1} \\
\midrule
IHL & 54.9 & 39.5 & 47.2 \\
IHL + FILA & 60.6 & 37.0 & 48.8 \\
IHL + VILA & 63.6 & 38.4 & 51.0 \\
\rowcolor{methodbg} IHL + \mname & 49.7 & 33.1 & \textbf{41.4} \\
\bottomrule
\end{tabular}
\caption{Main results on the WMDP benchmark. We report the accuracy (\%) on WMDP-Bio and WMDP-Cyber. Lower accuracy indicates better unlearning performance. All reported checkpoints maintain at least 95\% of the original model's MMLU utility.}
\label{tab:wmdp_results}
\end{table}

\paragraph{Evaluation Metrics and Setting.}
We follow the standard evaluation protocols for each benchmark. For TOFU, we employ the Forget Quality (FQ) \citep{maini2024tofu} to measure the extent of data removal, which is the statistical similarity between the unlearned model and an oracle retrained model. Note that FQ values reported in the results are log-scaled to facilitate visualization and comparison across different magnitudes. The model utility is the harmonic mean of nine metrics \citep{maini2024tofu}.
For WMDP, we report the accuracy of WMDP-Bio and WMDP-Cyber to assess the efficacy of unlearning and evaluate model utility using MMLU accuracy \citep{hendrycks2021mmlu}.
To ensure a fair comparison, we search the hyperparameters of all methods and only consider checkpoints that maintain at least 95\% of the original model's utility and select the one achieving the best unlearning performance. Since all methods demonstrate comparable utility under this criterion, we focus on reporting the forgetting performance in the subsequent results.

\subsection{Main Results}

We present the main unlearning results on the TOFU and WMDP benchmarks in Table~\ref{tab:tofu_results} and Table~\ref{tab:wmdp_results}, respectively.

\paragraph{Results on TOFU.}
As shown in Table~\ref{tab:tofu_results}, \mname consistently outperforms the baseline methods across different forget set sizes (1\%, 5\%, and 10\%) and model architectures (Phi-1.5B and Llama-2-7B). 
When using GD as the loss function, \mname achieves an average FQ of -7.3 on Phi-1.5B and -8.9 on Llama-2-7B, surpassing both FILA and VILA. 
The performance gain is even more pronounced when combined with the IHL. Specifically, IHL + \mname achieves the best overall performance, with an average FQ of -4.4 on Phi-1.5B and -2.4 on Llama-2-7B. 
This demonstrates that our \mname effectively complements different unlearning functions.

\paragraph{Results on WMDP.}
The results on the WMDP benchmark, summarized in Table~\ref{tab:wmdp_results}, further validate the effectiveness of \mname. 
GD + \mname achieves the lowest average accuracy of 35.1\% across the Bio and Cyber domains, which is a significant reduction from the original model's 54.6\% and outperforms GD + VILA (38.1\%). 
Similarly, IHL + \mname (41.4\%) shows a substantial improvement over IHL + FILA (48.8\%) and IHL + VILA (51.0\%). 
These results indicate that \mname can more aggressively suppress the forget-set knowledge without compromising the model's general capabilities.

\subsection{Ablation Studies}

\begin{table}[t]
\centering
\begin{tabular}{l|c|c|c}
\toprule
\textbf{Method} & \textbf{Bio} $\downarrow$ & \textbf{Cyber} $\downarrow$ & \textbf{AVG.} $\downarrow$ \\
\midrule
GD & 58.0 & 37.3 & 47.7 \\
GD + \initname & 54.0 & 25.8 & 39.9 \\
GD + \regname & 44.6 & 36.1 & 40.4 \\
\rowcolor{methodbg} GD + \mname & 41.8 & 28.4 & \textbf{35.1} \\
\midrule
IHL & 54.9 & 39.5 & 47.2 \\
IHL + \initname & 51.2 & 36.7 & 44.0 \\
IHL + \regname & 54.0 & 36.1 & 45.0 \\
\rowcolor{methodbg} IHL + \mname & 49.7 & 33.1 & \textbf{41.4} \\
\bottomrule
\end{tabular}
\caption{Ablation results on the WMDP benchmark. We compare the full \mname framework with two variants: \initname (representation-guided initialization only) and \regname (subspace regularization only). Lower values indicate better unlearning performance.}
\label{tab:wmdp_ablation}
\end{table}

To investigate the contribution of each component in \mname, we conduct ablation studies on the WMDP benchmark using both GD and IHL. We compare the full \mname framework with two variants: (1) \initname, which only employs the representation-guided initialization described in Section \ref{sec:initialization} without the subspace regularization loss \regname; and (2) \regname, which only applies the subspace regularization loss described in Section \ref{sec:regularization} while using standard LoRA initialization.
As shown in Table \ref{tab:wmdp_ablation}, both the initialization and the regularization loss contribute to the overall performance.
Specifically, \initname achieves better unlearning performance than the base GD and IHL, demonstrating that aligning the LoRA update with the forget-set-dominant subspace provides a strong starting point for unlearning. Similarly, \regname also shows improvements over the base methods, indicating that the subspace regularization loss effectively guides the optimization process. Most importantly, the full \mname framework, which combines both components, achieves the best results. This suggests a strong synergy between the geometric initialization and the structural regularization, where the initialization provides a well-aligned starting subspace and the regularization ensures that the subsequent updates remain within a safe region that minimizes interference with the retain set.

\subsection{Analysis of Hyperparameters}

In this section, we investigate the sensitivity of \mname to two key hyperparameters: the LoRA rank $r$ and the balance parameter $\beta$ in Eq.~\ref{eq:norm_objective}. We conduct experiments on TOFU-10\% and WMDP-Cyber.

\paragraph{Impact of LoRA Rank $r$.}
The rank $r$ determines the dimensionality of the low-rank update $\Delta W$. As shown in Table \ref{tab:rank}, we evaluate the performance across different ranks. A higher rank provides more degrees of freedom to align the update with the forget-set subspace, potentially leading to more effective unlearning. However, excessively large ranks may increase the risk of interfering with the retain-set knowledge if the orthogonal regularization is not sufficiently strong. Our results indicate that a rank of $r=16$ or $r=32$ typically strikes a good balance between unlearning efficacy and utility preservation.

\begin{table}[t]
\centering
\begin{tabular}{l|c|c}
\toprule
\textbf{Rank $r$} & \textbf{Forget 10\%} $\uparrow$ & \textbf{WMDP Cyber} $\downarrow$ \\
\midrule
8   & -3.3 & 30.5 \\
16  & -2.8 & 27.9 \\
32  & -1.4 & 28.4 \\
\bottomrule
\end{tabular}
\caption{Impact of LoRA rank $r$ on TOFU and WMDP benchmarks.}
\label{tab:rank}
\end{table}

\paragraph{Impact of Balance Parameter $\beta$.}
The parameter $\beta \in [0, 1]$ in Eq.~\ref{eq:norm_objective} regulates the trade-off between maximizing forget-set variance and minimizing retain-set interference during subspace identification. We vary $\beta$ while keeping other settings fixed, and report the resulting FQ on TOFU-10\% and unlearning performance on WMDP-Cyber.
Table~\ref{tab:beta} shows a clear ``sweet spot'' at $\beta=0.3$, which achieves the best trade-off across both benchmarks (highest FQ on TOFU-10\% and lowest accuracy on WMDP-Cyber). When $\beta$ is too small (e.g., $0.1$), the objective over-emphasizes maximizing forget-set variance, yielding a less stable subspace for forgetting. Conversely, as $\beta$ increases (e.g., $\beta\ge 0.7$), the balanced covariance $\Cov_{\Delta}=(1-\beta)\Cov_F-\beta\Cov_R$ becomes dominated by the retain term, leading to a subspace that is overly conservative and thus weakens forgetting.

\begin{table}[t]
\centering
\begin{tabular}{l|c|c}
\toprule
\boldmath$\beta$ & \textbf{Forget 10\%} $\uparrow$ & \textbf{WMDP Cyber} $\downarrow$ \\
\midrule
0.1   & \heatE{-2.2} & \heatD{38.2} \\
0.3   & \heatF{-2.1} & \heatF{30.1} \\
0.5   & \heatD{-2.3} & \heatE{36.6} \\
0.7   & \heatC{-3.5} & \heatB{42.6} \\
0.9   & \heatB{-3.6} & \heatC{40.6} \\
\bottomrule
\end{tabular}
\caption{Impact of the balance parameter $\beta$ in Eq.~\ref{eq:norm_objective}. Darker colors indicate better performance.}
\label{tab:beta}
\end{table}

\begin{figure}[t]
\centering
\includegraphics[width=0.95\columnwidth]{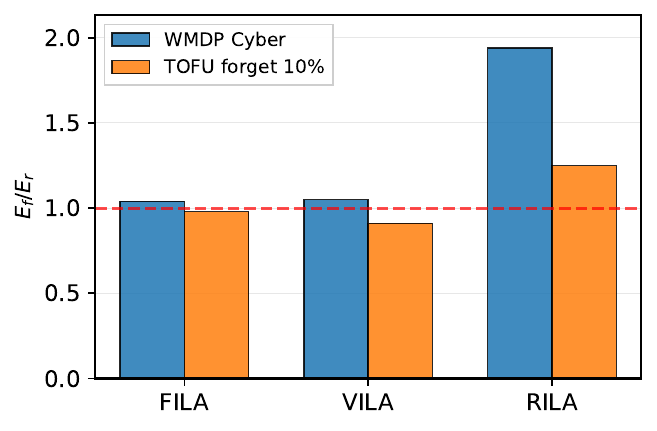}
\caption{Comparison of activation norms at initialization. \initname (our proposed initialization) achieves a significantly higher forget-to-retain energy ratio compared to FILA and VILA.}
\label{fig:init_energy}
\end{figure}

\begin{figure}[t]
\centering
\includegraphics[width=0.95\columnwidth]{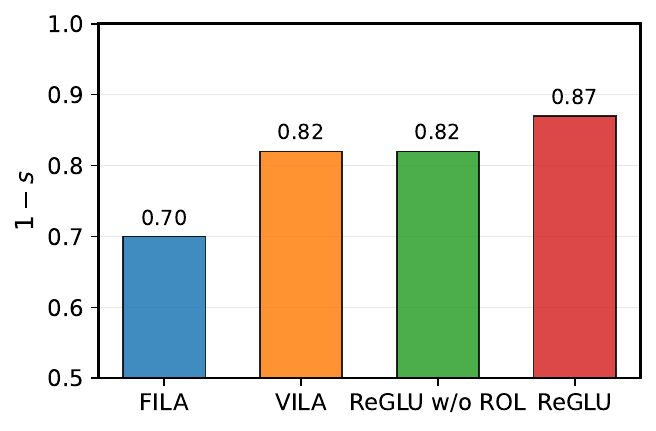}
\caption{Orthogonality analysis between LoRA $B$ matrix and retain subspace $P_B$. Higher values of $1 - s$ indicate greater orthogonality to the retain representation subspace, which is desirable for effective unlearning while preserving retain-set knowledge.}
\label{fig:b_subspace}
\end{figure}

\subsection{Analysis of Mechanism}
\label{sec:mechanism}

To understand the underlying mechanism of \mname, we analyze how the LoRA update interacts with the forget and retain sets at initialization and during training.

Figure~\ref{fig:init_energy} shows that \initname achieves a significantly higher forget-to-retain energy ratio at initialization compared to FILA and VILA, indicating that \initname better aligns the LoRA update with the forget-set subspace while minimizing interference with the retain set.

To analyze the subspace of LoRA outputs across different methods, we examine the angular distance between the columns of the LoRA $B$ matrix and the columns of the $P_B$ matrix (detailed in Section~\ref{sec:regularization}), which measures the influence of LoRA outputs on the retain representation subspace. Specifically, we compute the average pairwise cosine similarity between columns of $B$ and $P_B$, denoted as $s = \text{avg}_{i,j} \cos^2(B[:,i], P_B[:,j])$, and report $1 - s$, which eliminates the effect of the $B$ matrix's scale. Higher values indicate greater orthogonality to the retain subspace, which is desirable for effective unlearning.
We conduct experiments on Llama-2-7B with the Forget 5\% setting.
As shown in Figure~\ref{fig:b_subspace}, \mname achieves the highest orthogonality score (0.87), substantially outperforming both FILA (0.70) and VILA (0.82). This demonstrates that our method more effectively maintains LoRA updates orthogonal to the retain representation subspace. Furthermore, removing the \regname component (ReGLU - ROL, 0.82) results in a noticeable decrease in orthogonality, confirming the effectiveness of our subspace-controlled regularization in constraining the LoRA outputs to remain orthogonal to the retain set's principal directions.

\begin{table*}[t]
\centering
\begin{tabular}{l|c|c|c|c|c}
\toprule
\textbf{Method} & \textbf{Params} & \textbf{Rand.\ SVD} & \textbf{Forget 1\%} & \textbf{Forget 5\%} & \textbf{Forget 10\%} \\
\midrule
FILA & 7B & -- & 0.71 & 3.13 & 23.32 \\
VILA & 7B & -- & 0.09 & 0.38 & 0.81 \\
\rowcolor{methodbg} \initname & 7B & $\times$ & 0.50 & 0.50 & 0.51 \\
\rowcolor{methodbg} \initname & 7B & \checkmark & 0.07 & 0.11 & 0.12 \\
\rowcolor{methodbg} \initname & 70B & \checkmark & 0.19 & 0.27 & 0.29 \\
\bottomrule
\end{tabular}
\caption{Efficiency comparison on TOFU benchmark. Time is measured in GPU hours (lower is better). Randomized SVD enables efficient initialization even for 70B-scale models.}
\label{tab:efficiency}
\end{table*}

\subsection{Efficiency Analysis}
Table~\ref{tab:efficiency} compares the initialization cost on the TOFU benchmark. On Llama2-7B, \initname demonstrates significantly faster initialization compared to FILA across all settings. Compared to VILA, \initname shows comparable efficiency on smaller forget sets, but becomes faster on the Forget 10\% setting (0.51 vs.\ 0.81 GPU hours).
This scalability advantage stems from a fundamental difference in computational requirements: while FILA and VILA need to compute gradients to estimate parameter importance maps, \initname only requires a forward pass over the forget set to collect layer outputs, followed by covariance computation and eigenvalue decomposition. The most time-consuming operation is the eigenvalue decomposition, whose cost remains nearly constant regardless of dataset size. As a result, \initname's efficiency advantage becomes increasingly pronounced with larger forget sets.

However, the eigenvalue decomposition cost depends on the layer dimension $d$, which grows with model size. This could become a bottleneck for larger models. Since \initname requires only the top-$r$ eigenvectors, we can leverage randomized SVD to efficiently approximate the leading eigenvectors. Standard eigenvalue decomposition has complexity $O(\min(d^2 n, d n^2))$ for an $n \times d$ activation matrix, whereas randomized SVD reduces this to $O(ndr + r^2(n+d))$, where $r \ll \min(n,d)$ is the target rank. For typical LoRA configurations where $r \le 64$ and $d \sim 4096$, this represents a substantial speedup. As shown in Table~\ref{tab:efficiency}, with randomized SVD, initialization on Llama2-7B drops to 0.07--0.12 GPU hours. Even on Llama2-70B, \initname maintains efficient initialization at 0.19--0.29 GPU hours, confirming its scalability to larger models.

\section{Conclusion}
In this work, we introduced \mname, a novel LoRA-based method for LLM unlearning. By shifting the focus from parameter importance to representation subspaces, \mname effectively addresses the challenges posed by the superposition phenomenon and parameter polysemanticity. Our approach leverages a balanced subspace initialization to align unlearning updates with forget-specific directions and an orthogonal regularization term to protect the principal directions of the retain set. Extensive experiments on the TOFU and WMDP benchmarks demonstrate that \mname consistently outperforms state-of-the-art baselines, achieving superior unlearning quality while maintaining high model utility. Our analysis further confirms that \mname successfully disentangles forget and retain representations, providing a robust and precise solution for selective forgetting in large language models.

\section*{Limitations}
Despite its effectiveness, \mname has several limitations. First, the method requires computing covariance matrices and performing eigenvalue decomposition for each layer, which introduces a one-time computational overhead during initialization. While this cost is significantly lower than full fine-tuning, it may become non-trivial for extremely large models or high-dimensional representations. Second, the quality of the identified subspaces depends on the representativeness of the small subsets of forget and retain data used for covariance estimation. If these subsets do not accurately capture the underlying distributions, the initialization and regularization may be less effective.
Third, our evaluation is primarily focused on the TOFU and WMDP benchmarks. While these are standard in the field, further investigation is needed to assess the generalizability of \mname across a broader range of domains and unlearning tasks.
Finally, the performance of \mname is sensitive to hyperparameters such as the LoRA rank $r$ and the regularization strength $\lambda$, which may require careful tuning for different model architectures and datasets.

\bibliography{custom}

\appendix

\section{Proof of Theorem~\ref{theorem:init-of-BA}}
\label{app:proof}

The proof of Theorem~\ref{theorem:init-of-BA} follows the mathematical framework established in SC-LoRA \citep{luo2025sclora}. We adapt their Theorem 1 to the unlearning problem by reinterpreting the positive task as the forget set and the negative task as the retain set. The key insight---that eigenvectors of the weighted covariance difference capture discriminative directions---applies naturally to our objective of maximizing impact on $\mathcal{D}_f$ while minimizing interference with $\mathcal{D}_r$.

To prove this theorem, we first introduce the concept of orthogonal projection operators.
\begin{definition}
    Suppose $S$ is a subspace of $\mathbb{R}^n$ of dimension $r$, and let $\{q_i\}_{i\in[r]}$ be an orthonormal basis of $S$,
    then the orthogonal projection operator onto $S$, denoted $\Pi_S$, is defined as:
    \begin{equation}
      \Pi_S(x) = \sum_{i=1}^r (q_i^\top x) q_i = \sum_{i=1}^r (q_iq_i^\top) x.
    \end{equation}
    Note: the selection of the orthonormal basis does not affect $\Pi_S$. 
\end{definition}

\begin{proof}
\label{proof:init-of-BA}
We prove this theorem in three steps: (1) establish the relationship between $\|\Delta W x\|_2^2$ and projection onto subspace $S$; (2) derive the expected projection energy in terms of covariance matrices; and (3) apply Ky Fan's theorem to show that eigenvectors of $\Cov_{\Delta}$ maximize the objective.

\paragraph{Step 1: Projection operator representation.}
Let $\Delta W = BA$. At initialization, $B_{\text{init}} = Q_r$ and $A_{\text{init}} = Q_r^\top W_0$, so $\Delta W = Q_r Q_r^\top W_0$. For any input $x$, let $h = W_0 x$ be the corresponding output representation. Since $Q_r$ is an orthonormal basis for the subspace $S$, we have $Q_r Q_r^\top = \Pi_S$. Thus:
\begin{equation*}
    \Delta W x = Q_r Q_r^\top W_0 x = \Pi_S(h).
\end{equation*}
Therefore, $\|\Delta W x\|_2^2 = \|\Pi_S(h)\|_2^2$.

\paragraph{Step 2: Expected projection energy.}
Let $\{v_i\}_{i\in[r]}$ be any orthonormal basis that spans $S$, and denote $\tilde{I}_r = \sum_{i=1}^r v_i v_i^\top$. From the orthonormality of $\{v_i\}_{i\in[r]}$, we have:
\begin{equation*}
  \begin{aligned}
    \tilde{I}_r^\top \tilde{I}_r &= \sum_{i=1}^{r} \sum_{j=1}^{r} v_i v_i^\top v_j v_j^\top \\
    &= \sum_{i=1}^{r} \sum_{j=1}^{r} v_i \langle v_i, v_j\rangle v_j^\top \\
    &= \sum_{i=1}^{r} \sum_{j=1}^{r} \delta_{ij} v_i v_j^\top \\
    &= \sum_{i=1}^{r} v_i v_i^\top = \tilde{I}_r.
  \end{aligned}
\end{equation*}

For either distribution $\mathcal{P} \in \{\mathcal{P}_F, \mathcal{P}_R\}$ with covariance $\Cov$, we have:
\begin{equation*}
  \begin{aligned}
    \mathbb{E}_{h \sim \mathcal{P}}\left[\|\Pi_S(h)\|_2^2\right]
    &= \mathbb{E}_{h \sim \mathcal{P}}\left[\|\tilde{I}_r h\|_2^2\right] \\
    &= \mathbb{E}_{h \sim \mathcal{P}}\left[ \mathrm{tr}\left(h^\top \tilde{I}_r^\top \tilde{I}_r h \right)\right] \\
    &= \mathbb{E}_{h \sim \mathcal{P}}\left[ \mathrm{tr}\left(h^\top \tilde{I}_r h \right)\right] \\
    &= \mathbb{E}_{h \sim \mathcal{P}}\left[ \mathrm{tr}\left(\tilde{I}_r h h^\top\right)\right] \\
    &= \mathrm{tr}\left(\tilde{I}_r \mathbb{E}_{h \sim \mathcal{P}} \left[h h^\top\right]\right) \\
    &= \mathrm{tr}\left(\tilde{I}_r \Cov \right).
  \end{aligned}
\end{equation*}

Substituting into Eq.~\ref{eq:norm_objective}, the reward function becomes:
\begin{equation*}
  \begin{aligned}
  R(S) &= (1-\beta)\mathbb{E}_{h \sim \mathcal{P}_{F}} \left[ \| \Pi_S(h) \|_2^2 \right] \\ &- \beta \mathbb{E}_{h \sim \mathcal{P}_{R}} \left[ \|\Pi_S(h)\|_2^2 \right] \\
  &= (1-\beta)\mathrm{tr}\left(\tilde{I}_r \Cov_{F} \right) - \beta  \mathrm{tr}\left( \tilde{I}_r \Cov_{R} \right) \\
  &= \mathrm{tr}\left( \tilde{I}_r \Cov_{\Delta} \right).
  \end{aligned}
\end{equation*}

\paragraph{Step 3: Optimality via spectral decomposition.}
Suppose the spectral decomposition of $\Cov_{\Delta}$ is $Q \Sigma Q^\top$, where $Q = (q_1, q_2, \dots, q_{d_{out}})$ is orthogonal and $\Sigma$ is diagonal with eigenvalues in descending order. Then:
\begin{equation*}
  \begin{aligned}
  R(S) &= \mathrm{tr}\left( \tilde{I}_r Q \Sigma Q^\top \right) \\
  &= \sum_{i=1}^{r}\mathrm{tr}\left( v_i v_i^\top Q \Sigma Q^\top \right) \\
  &= \sum_{i=1}^{r} v_i^\top Q \Sigma Q^\top v_i.
  \end{aligned}
\end{equation*}

Extend $\{v_i\}_{i\in[r]}$ to a complete orthonormal basis $\{v_i\}_{i=1}^{d_{out}}$ for $\mathbb{R}^{d_{out}}$, and denote $u_i = Q^\top v_i$. Since $Q$ is orthogonal, $\{u_i\}_{i=1}^{d_{out}}$ is also orthonormal. By Ky Fan's theorem, 
\begin{equation*}
\max_{\{v_i\}_{i\in[r]}} \sum_{i=1}^{r} v_i^\top Q \Sigma Q^\top v_i = \sum_{i=1}^{r} \Sigma_{ii},
\end{equation*}
and this maximum is achieved when $S = \mathrm{span}(\{q_1, q_2, \dots, q_r\})$, where $q_i$ are the top-$r$ eigenvectors of $\Cov_{\Delta}$. Therefore, initializing $B = Q_r$ and $A = Q_r^\top W_0$ maximizes the objective in Eq.~\ref{eq:norm_objective}.
\paragraph{Uniqueness under eigenvalue gap.} If the eigenvalues of $\Sigma$ satisfy $\lambda_r > \lambda_{r+1}$ (a strict gap), then the maximizing subspace is unique and equals $\mathrm{span}(\{q_1, \dots, q_r\})$. Let $V = (v_1, \dots, v_r)$ collect an orthonormal basis of $S$ and define $U = Q^\top V$. Using $Q$'s orthogonality,
\begin{equation*}
  R(S) = \sum_{i=1}^{r} v_i^\top Q \Sigma Q^\top v_i = \sum_{j=1}^{d_{out}} \Sigma_{jj} \sum_{i=1}^{r} U_{ji}^2.
\end{equation*}
By orthogonality, $0 \le \sum_{i=1}^{r} U_{ji}^2 \le 1$ and $\sum_{j=1}^{d_{out}} \sum_{i=1}^{r} U_{ji}^2 = r$. With a strict spectral gap, the maximum is attained if and only if
\begin{equation*}
  \sum_{i=1}^{r} U_{ji}^2 = \begin{cases}1, & 1 \le j \le r, \\ 0, & r+1 \le j \le d_{out},\end{cases}
\end{equation*}
which is equivalent to $\sum_{i=1}^{r} v_i v_i^\top = \sum_{i=1}^{r} q_i q_i^\top$, hence $S = \mathrm{span}(\{q_1, \dots, q_r\})$. When $\lambda_r = \lambda_{r+1}$ (no gap), any $r$-dimensional subspace within the top-eigenspace achieves the same maximum, matching SC-LoRA's discussion.
\end{proof}

\section{Concentration Bound for Empirical Covariance Estimation}
\label{app:cov-estimation}

This section provides a concentration bound for empirical covariance estimation, bridging the gap between the population-level covariance matrices used in Theorem~\ref{theorem:init-of-BA} and the empirical estimators used in practice. Under a mild bounded-activation assumption, the empirical covariance matrices concentrate sharply in spectral norm, and the balanced covariance used by \initname remains a stable approximation to its population counterpart.

\begin{theorem}[Spectral concentration of empirical covariance]
\label{thm:cov-concentration}
Let $h \in \mathbb{R}^{d}$ be a random representation vector satisfying $\|h\|_2 \le M$ almost surely, and define its covariance matrix
\begin{equation*}
\Sigma = \mathbb{E}[h h^\top].
\end{equation*}
Given i.i.d. samples $\{h_i\}_{i=1}^{N}$, let
\begin{equation*}
\hat{\Sigma} = \frac{1}{N}\sum_{i=1}^{N} h_i h_i^\top.
\end{equation*}
Then for any $\delta \in (0,1)$, with probability at least $1-\delta$,
\begin{equation*}
\|\hat{\Sigma} - \Sigma\|_2
\le
\frac{4M^2\log(2d/\delta)}{3N}
+
M^2\sqrt{\frac{2\log(2d/\delta)}{N}}.
\end{equation*}
In particular, when $N \ge \log(2d/\delta)$, there exists an absolute constant $C>0$ such that
\begin{equation*}
\|\hat{\Sigma} - \Sigma\|_2
\le
CM^2\sqrt{\frac{\log(2d/\delta)}{N}}.
\end{equation*}
\end{theorem}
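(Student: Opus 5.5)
The plan is to apply the matrix Bernstein inequality (Tropp's version for sums of independent, zero-mean, bounded self-adjoint random matrices) to the centered summands
\begin{equation*}
X_i = \frac{1}{N}\left(h_i h_i^\top - \Sigma\right), \qquad i=1,\dots,N,
\end{equation*}
so that $\hat{\Sigma} - \Sigma = \sum_{i=1}^N X_i$. Each $X_i$ is symmetric $d\times d$, and $\mathbb{E}[X_i]=0$ by definition of $\Sigma$. Matrix Bernstein states that if $\|X_i\|_2 \le L$ almost surely and $v = \|\sum_i \mathbb{E}[X_i^2]\|_2$, then for all $t\ge 0$
\begin{equation*}
\Pr\left(\Big\|\sum_i X_i\Big\|_2 \ge t\right) \le 2d \exp\left(\frac{-t^2/2}{v + Lt/3}\right).
\end{equation*}

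The first step bounds $L$. Since $hh^\top$ and $\Sigma$ are both positive semidefinite with spectral norm at most $M^2$ (because $\|hh^\top\|_2=\|h\|_2^2\le M^2$, and $\|\Sigma\|_2 \le \mathbb{E}\|h\|_2^2 \le M^2$), their difference has spectral norm at most $\max(\|hh^\top\|_2,\|\Sigma\|_2)\le M^2$. Hence $L = M^2/N$.

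The second step bounds the variance proxy $v$. We have $\mathbb{E}[(hh^\top-\Sigma)^2] = \mathbb{E}[\|h\|_2^2\, hh^\top] - \Sigma^2 \preceq M^2 \Sigma$, so its norm is at most $M^4$. Therefore $v \le N \cdot M^4/N^2 = M^4/N$.

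The third step inverts the tail bound. Set the right-hand side equal to $\delta$ and write $\ell=\log(2d/\delta)$. This gives the quadratic condition $t^2/2 = \ell(v + Lt/3)$, whose positive root is
\begin{equation*}
t = \frac{L\ell}{3} + \sqrt{\frac{L^2\ell^2}{9} + 2v\ell}.
\end{equation*}
Using $\sqrt{a+b}\le\sqrt a+\sqrt b$, we get $t \le \frac{2L\ell}{3} + \sqrt{2v\ell}$. Substituting the values of $L$ and $v$ yields exactly
\begin{equation*}
t \le \frac{2M^2\ell}{3N} + M^2\sqrt{\frac{2\ell}{N}},
\end{equation*}
which is at most the stated bound with constant $4/3$. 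The slack comes from the cruder constant in the stated bound; alternatively it is what one obtains from the $L$-bound $\|X_i\|_2\le 2M^2/N$ via the triangle inequality. Either way, the stated inequality holds.

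The final step is the corollary for $N\ge \ell$. In this regime $\ell/N \le \sqrt{\ell/N}$, so both terms are $O(M^2\sqrt{\ell/N})$, and one may take $C = 4/3+\sqrt2$.

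The only delicate point is the variance computation: one must verify the operator inequality $\mathbb{E}[\|h\|^2 hh^\top]\preceq M^2\Sigma$ and that subtracting $\Sigma^2\succeq 0$ only decreases the PSD quantity. The rest is routine. The assumption of i.i.d. samples with almost-sure boundedness is exactly what matrix Bernstein needs, so no truncation is required.
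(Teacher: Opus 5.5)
Your proposal is correct and follows the paper's proof essentially step for step: the same centered summands $X_i=\frac{1}{N}(h_ih_i^\top-\Sigma)$, matrix Bernstein, the same variance bound $\mathbb{E}[(hh^\top-\Sigma)^2]\preceq M^2\Sigma$, the same inversion of the tail bound, and the same argument for the $N\ge\log(2d/\delta)$ corollary. The only difference is that you bound $\|X_i\|_2\le M^2/N$ using that $hh^\top$ and $\Sigma$ are both positive semidefinite, whereas the paper uses the triangle inequality to get $2M^2/N$; that is exactly where the stated constant $4/3$ (versus your $2/3$) comes from, and your sharper bound implies the stated one.
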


\begin{proof}
Define
\begin{equation*}
X_i = \frac{1}{N}(h_i h_i^\top - \Sigma).
\end{equation*}
Then $\{X_i\}_{i=1}^{N}$ are independent, zero-mean, symmetric random matrices and
\begin{equation*}
\hat{\Sigma} - \Sigma = \sum_{i=1}^{N} X_i.
\end{equation*}

We bound the two quantities required by the matrix Bernstein inequality \citep[Theorem 6.1]{tropp2012userfriendly}.

\paragraph{Spectral norm bound.}
Since $\|h_i h_i^\top\|_2 = \|h_i\|_2^2 \le M^2$ and $\|\Sigma\|_2 \le \mathbb{E}\|h\|_2^2 \le M^2$, we have
\begin{equation*}
\|X_i\|_2
\le
\frac{\|h_i h_i^\top\|_2 + \|\Sigma\|_2}{N}
\le
\frac{2M^2}{N}.
\end{equation*}
Hence the Bernstein radius is $R = 2M^2/N$.

\paragraph{Variance bound.}
Using $\mathbb{E}[h h^\top] = \Sigma$ and $\|h\|_2^2 \le M^2$, we obtain
\begin{equation*}
\mathbb{E}[(h h^\top - \Sigma)^2]
=
\mathbb{E}[\|h\|_2^2 h h^\top] - \Sigma^2
\preceq
M^2 \Sigma.
\end{equation*}
Therefore,
\begin{equation*}
\begin{aligned}
\mathbb{E}[X_i^2]
&\preceq
\frac{M^2 \Sigma}{N^2}, \\
\sigma^2
:=
\left\|\sum_{i=1}^{N}\mathbb{E}[X_i^2]\right\|_2
&\le
\frac{M^2\|\Sigma\|_2}{N}
\le
\frac{M^4}{N}.
\end{aligned}
\end{equation*}

Applying matrix Bernstein inequality gives, for any $t>0$,
\begin{equation*}
\Pr\!\left[\left\|\sum_{i=1}^{N} X_i\right\|_2 \ge t\right]
\le
2d \exp\!\left(
-\frac{t^2/2}{\sigma^2 + Rt/3}
\right).
\end{equation*}
Set $L = \log(2d/\delta)$ and choose
\begin{equation*}
t
=
\frac{LR}{3} + \sqrt{\frac{L^2R^2}{9} + 2L\sigma^2}
\le
\frac{2LR}{3} + \sqrt{2L\sigma^2}.
\end{equation*}
Then the right-hand side is at most $\delta$, and substituting the bounds on $R$ and $\sigma^2$ yields
\begin{equation*}
\|\hat{\Sigma} - \Sigma\|_2
\le
\frac{4M^2\log(2d/\delta)}{3N}
+
M^2\sqrt{\frac{2\log(2d/\delta)}{N}}
\end{equation*}
with probability at least $1-\delta$. When $N \ge \log(2d/\delta)$, the $O(N^{-1})$ term is dominated by the $O(N^{-1/2})$ term up to a universal constant, giving the simplified bound.
\end{proof}

\begin{theorem}[Stability of the balanced covariance used by \initname]
\label{thm:delta-cov-stability}
Let $\hat{\Cov}_F$ and $\hat{\Cov}_R$ be empirical covariance estimators constructed from $N_f$ forget samples and $N_r$ retain samples, respectively, and define
\begin{equation*}
\hat{\Cov}_{\Delta} = (1-\beta)\hat{\Cov}_F - \beta \hat{\Cov}_R.
\end{equation*}
Assume the representations from both sets satisfy $\|h\|_2 \le M$ almost surely. Then with probability at least $1-\delta$,
\begin{equation*}
\|\hat{\Cov}_{\Delta} - \Cov_{\Delta}\|_2
\le
(1-\beta)\epsilon_f + \beta \epsilon_r,
\end{equation*}
where
\begin{equation*}
\begin{aligned}
\epsilon_f
&=
\frac{4M^2\log(4d/\delta)}{3N_f}
+
M^2\sqrt{\frac{2\log(4d/\delta)}{N_f}}, \\
\epsilon_r
&=
\frac{4M^2\log(4d/\delta)}{3N_r}
+
M^2\sqrt{\frac{2\log(4d/\delta)}{N_r}}.
\end{aligned}
\end{equation*}
Consequently, if the eigengap
\begin{equation*}
g = \lambda_r(\Cov_\Delta) - \lambda_{r+1}(\Cov_\Delta)
\end{equation*}
is strictly larger than $(1-\beta)\epsilon_f + \beta \epsilon_r$, then the top-$r$ eigenspace recovered from $\hat{\Cov}_{\Delta}$ is a stable perturbation of the population-optimal subspace, with principal-angle error controlled on the order of $\|\hat{\Cov}_{\Delta} - \Cov_{\Delta}\|_2 / g$ by standard eigenspace perturbation arguments.
\end{theorem}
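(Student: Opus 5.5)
The plan is to reduce everything to Theorem~\ref{thm:cov-concentration} applied twice, combined with a union bound and the triangle inequality, and then to invoke a standard eigenspace perturbation result (Davis--Kahan, possibly with Weyl's inequality) for the subspace claim.

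\textbf{Step 1: one high-probability event for both sets.} Apply Theorem~\ref{thm:cov-concentration} separately to the forget representations (with $N=N_f$) and to the retain representations (with $N=N_r$), each at confidence level $\delta/2$. Replacing $\delta$ by $\delta/2$ turns $\log(2d/\delta)$ into $\log(4d/\delta)$. This gives exactly the stated $\epsilon_f$ and $\epsilon_r$, namely $\|\hat{\Cov}_F-\Cov_F\|_2\le\epsilon_f$ and $\|\hat{\Cov}_R-\Cov_R\|_2\le\epsilon_r$, each failing with probability at most $\delta/2$. A union bound then shows both hold simultaneously with probability at least $1-\delta$. No independence between the two samples is needed.

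\textbf{Step 2: combine the two bounds.} Linearity gives
\[
\hat{\Cov}_\Delta-\Cov_\Delta=(1-\beta)(\hat{\Cov}_F-\Cov_F)-\beta(\hat{\Cov}_R-\Cov_R).
\]
Since $\beta\in[0,1]$, both coefficients are nonnegative. The triangle inequality for the spectral norm then yields the bound $(1-\beta)\epsilon_f+\beta\epsilon_r$.

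\textbf{Step 3: eigenspace stability.} Set $E=\hat{\Cov}_\Delta-\Cov_\Delta$. Both matrices are symmetric, so Weyl's inequality gives $|\lambda_i(\hat{\Cov}_\Delta)-\lambda_i(\Cov_\Delta)|\le\|E\|_2$. I would use the Davis--Kahan $\sin\Theta$ theorem in the Yu--Wang--Samworth form, which needs only the population eigengap $g$:
\[
\|\sin\Theta(\hat Q_r,Q_r)\|_2\le 2\|E\|_2/g .
\]
This is precisely an $O(\|E\|_2/g)$ principal-angle bound. The assumption $g>\|E\|_2$ is used to make the bound nontrivial, i.e.\ less than $2$ and meaningfully small when $\|E\|_2\ll g$, and to ensure that the top-$r$ eigenspace of $\hat{\Cov}_\Delta$ is well defined.

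\textbf{Main obstacle.} This is the last step: matching the hypothesis ``$g>\|E\|_2$'' to a precise perturbation theorem. The classical Davis--Kahan form requires a gap $g-\|E\|_2>0$ between the perturbed spectrum and the unperturbed one, giving a bound of $\|E\|_2/(g-\|E\|_2)$. Under the stated hypothesis that gap is exactly positive, so either version delivers the ``order $\|E\|_2/g$'' conclusion up to constants. I would state the Yu--Wang--Samworth version to avoid the degenerate denominator.
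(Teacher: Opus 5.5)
Your proposal is correct and follows the paper's proof: apply Theorem~\ref{thm:cov-concentration} to each set at failure level $\delta/2$ (which turns $\log(2d/\delta)$ into $\log(4d/\delta)$), take a union bound, use the triangle inequality with the nonnegative weights $1-\beta$ and $\beta$, and then cite standard eigenspace perturbation theory, which you make explicit through Davis--Kahan in the Yu--Wang--Samworth form where the paper leaves it unnamed. One small correction: $g>\|E\|_2$ alone does not make the empirical top-$r$ eigenspace unique, since Weyl only gives $\hat\lambda_r-\hat\lambda_{r+1}\ge g-2\|E\|_2$, but the Yu--Wang--Samworth bound holds for any choice of top-$r$ eigenvectors, so this does not affect your argument.
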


\begin{proof}
Apply Theorem~\ref{thm:cov-concentration} to $\hat{\Cov}_F$ and $\hat{\Cov}_R$ separately with failure probability $\delta/2$ each, and take a union bound. On this event,
define $\Delta_F = \hat{\Cov}_F - \Cov_F$ and $\Delta_R = \hat{\Cov}_R - \Cov_R$. Then
\begin{equation*}
\begin{aligned}
\|\hat{\Cov}_{\Delta} - \Cov_{\Delta}\|_2
&=
\|(1-\beta)\Delta_F - \beta\Delta_R\|_2 \\
&\le
(1-\beta)\|\Delta_F\|_2
+
\beta\|\Delta_R\|_2 \\
&\le
(1-\beta)\epsilon_f + \beta \epsilon_r.
\end{aligned}
\end{equation*}
The eigenspace stability statement then follows from standard perturbation bounds for symmetric matrices: once the perturbation magnitude is smaller than the spectral gap $g$, the leading eigenspace of $\hat{\Cov}_{\Delta}$ remains close to that of $\Cov_{\Delta}$.
\end{proof}

\section{Algorithm}
\label{app:algorithm}

We provide the complete algorithmic description of \mname in Algorithm~\ref{alg:method}.

\begin{algorithm*}
\caption{Representation Subspace-Controlled Unlearning (\mname)}
\label{alg:method}
    \begin{algorithmic}[1]
        \Require Pre-trained model $\mathcal{M}$ with parameters $\boldsymbol{\theta}$, forget set $\mathcal{D}_f$, retain set $\mathcal{D}_r$, LoRA rank $r$, retain subspace dimension $k$, hyperparameters $\beta, \gamma, \lambda$
        
        \State \textbf{Phase 1: Initialization}
        \State Sample $N_f$ examples from $\mathcal{D}_f$ and $N_r$ examples from $\mathcal{D}_r$
        \For{all trainable parameters in $\mathcal{M}$}
            \State Feed forget samples, collect outputs $H_f = [h_1^{(f)}, h_2^{(f)}, \dots, h_{N_f}^{(f)}]^\top \in \mathbb{R}^{N_f \times d}$
            \State Feed retain samples, collect outputs $H_r = [h_1^{(r)}, h_2^{(r)}, \dots, h_{N_r}^{(r)}]^\top \in \mathbb{R}^{N_r \times d}$
            \State Compute $\Cov_F \leftarrow \frac{1}{N_f} H_f^\top H_f$
            \State Compute $\Cov_R \leftarrow \frac{1}{N_r} H_r^\top H_r$
            \State Compute $\Cov_\Delta \leftarrow (1-\beta) \Cov_F - \beta \Cov_R$
            \State Perform eigenvalue decomposition on $\Cov_\Delta$
            \State Extract top-$r$ eigenvectors $Q_r = (q_1, q_2, \dots, q_r)$
            \State Initialize $B_{\text{init}} \leftarrow Q_r$
            \State Initialize $A_{\text{init}} \leftarrow Q_r^\top W_0$
            \State Compute $W_{\text{res}} \leftarrow W_0 - B_{\text{init}} A_{\text{init}}$ \Comment{Residual weight; frozen during training}
            \State Perform eigenvalue decomposition on $\Cov_R$
            \State Extract top-$k$ eigenvectors $P_B = (p_1, p_2, \dots, p_k)$
        \EndFor
        
        \State \textbf{Phase 2: Training}
        \For{each training iteration}
            \State Sample mini-batch from $\mathcal{D}_f$ and $\mathcal{D}_r$
            \State Compute forget loss: $\mathcal{L}_{\text{forget}} \leftarrow \mathcal{L}_{\text{IHL}}(\mathcal{D}_f)$ or $\mathcal{L}_{\text{GA}}(\mathcal{D}_f)$
            \State Compute retain loss: $\mathcal{L}_{\text{retain}} \leftarrow \mathcal{L}_{\text{CE}}(\mathcal{D}_r)$
            \State Compute orthogonal loss: $\mathcal{L}_{\regname} \leftarrow \| B^\top P_B \|_F^2$
            \State Compute total loss: $\mathcal{L}_{\text{total}} \leftarrow \mathcal{L}_{\text{forget}} + \gamma \mathcal{L}_{\text{retain}} + \lambda \mathcal{L}_{\regname}$
            \State Update LoRA parameters $\{A, B\}$ via gradient descent
        \EndFor \\
        \Return Unlearned model $\mathcal{M}'$ with updated LoRA adapters
    \end{algorithmic}
\end{algorithm*}


\section{Implementation and Training Details}
\label{app:training-details}

\subsection{Hardware and Environment}
\label{app:hardware-env}
Unless otherwise specified, all experiments were conducted on a single NVIDIA L20 GPU (48GB VRAM).
Experiments on WMDP were conducted on a single NVIDIA A100 GPU (40GB VRAM).
Our implementation is based on PyTorch 2.5.1 (CUDA 12.1) and the Hugging Face ecosystem, including
Transformers 5.0.0.dev0, Tokenizers 0.22.1, and PEFT 0.17.1.
We used DeepSpeed with ZeRO Stage 2 for memory optimization and launched runs via \texttt{torchrun}.
All experiments were trained using BF16 precision by default.

\subsection{Models and LoRA Configurations}
\label{app:model-lora}
We report results on three backbone models depending on the benchmark: Llama2-7B and Phi-1.5B for TOFU,
and Zephyr-7B-$\beta$ for WMDP.
For parameter-efficient updates, we apply LoRA adapters to the following linear projections in every transformer block:
\{ \texttt{q\_proj}, \texttt{k\_proj}, \texttt{v\_proj}, \texttt{o\_proj}, \texttt{gate\_proj}, \texttt{up\_proj}, \texttt{down\_proj} \}.
Unless otherwise stated, we use LoRA rank $r=32$ and set the scaling factor to $\alpha = 2r$ (thus $\alpha=64$).
The LoRA dropout is set to 0.0 for TOFU and 0.05 for WMDP.

\subsection{Hyperparameters Search Space}
\label{app:hp-search}
We performed a grid sweep over key hyperparameters.
Across all experiments, we fix the retain loss weight $\gamma=1.0$ and the retain subspace dimension $k=128$.

\paragraph{TOFU.}
All TOFU runs were trained for 5 epochs with batch size 4 and gradient accumulation steps 8 (effective batch size 32),
using weight decay 0.01.
For Llama2-7B, we swept the learning rate over
$\{1\times10^{-5},\, 5\times10^{-5},\, 1\times10^{-4}\}$,
and swept $\lambda\in\{0.5,\,0.7\}$ and $\beta\in\{0.5,\,0.7\}$.
For Phi-1.5B, we used the same grid, additionally including $2\times10^{-4}$ in the learning-rate sweep,
i.e., $\{1\times10^{-5},\, 5\times10^{-5},\, 1\times10^{-4},\, 2\times10^{-4}\}$, and swept
$\lambda\in\{0.5,\,0.7\}$ and $\beta\in\{0.5,\,0.7\}$.

\paragraph{WMDP.}
We swept the learning rate over $\{1\times10^{-5},\, 3\times10^{-5},\, 5\times10^{-5}\}$ and
$\lambda\in\{0.1,\,0.5,\,0.7\}$, with \texttt{max\_steps}=100.
All WMDP experiments used LoRA rank $r=32$ and $\alpha=64$.

\subsection{Model Selection Criteria}
\label{app:model-selection}
For TOFU, we selected hyperparameters that yield a favorable trade-off between Forget Quality (FQ) and Model Utility (MU).
For WMDP, we enforced a utility constraint based on MMLU, requiring MMLU to be at least 95\% of the original model's performance.
Among feasible configurations, we selected those that minimize accuracy on the target corpora (bio/cyber) within the swept hyperparameter grid.

\subsection{Best Configurations Reported for WMDP}
\label{app:wmdp-best-config}
For reproducibility, we report the exact best-performing configurations used in our main WMDP results.
All configurations below are evaluated on checkpoints of steps=100.
\begin{itemize}
  \item \textbf{GD + \mname(bio):} $r=32$, $\alpha=64$, lr $=1\times10^{-5}$, $\beta=0.7$, $\lambda=0.5$.
  \item \textbf{GD + \mname(cyber):} $r=32$, $\alpha=64$, lr $=3\times10^{-5}$, $\beta=0.5$, $\lambda=0.5$.
  \item \textbf{IHL + \mname(bio):} $r=32$, $\alpha=64$, lr $=5\times10^{-5}$, $\beta=0.5$, $\lambda=0.1$.
  \item \textbf{IHL + \mname(cyber):} $r=32$, $\alpha=64$, lr $=3\times10^{-5}$, $\beta=0.5$, $\lambda=0.5$.
\end{itemize}

\end{document}